\documentclass[11pt]{article}
\usepackage[preprint]{acl}
\usepackage[T1]{fontenc}
\usepackage[utf8]{inputenc}
\usepackage{times}
\usepackage{latexsym}
\usepackage{microtype}
\usepackage{inconsolata}
\usepackage{amsmath,amssymb,amsfonts,amsthm}
\usepackage{algorithm}
\usepackage{algorithmic}
\usepackage{graphicx}
\usepackage{booktabs}
\usepackage{hyperref}
\usepackage{xcolor}
\usepackage{multirow}
\usepackage{enumitem}
\usepackage{array}
\usepackage{tabularx}
\usepackage{makecell}
\usepackage{float}
\usepackage{tikz}
\usepackage{pgfplots}
\usepackage{placeins}
\usetikzlibrary{arrows.meta,positioning,patterns,calc,fit,shapes}
\pgfplotsset{compat=1.17}
\pgfplotsset{
  aclplot/.style={
    width=\linewidth,
    height=4.8cm,
    grid=both,
    tick label style={font=\footnotesize},
    label style={font=\footnotesize},
    title style={font=\footnotesize}
  }
}

\newtheorem{theorem}{Theorem}
\newtheorem{lemma}{Lemma}
\newtheorem{corollary}{Corollary}
\newtheorem{definition}{Definition}
\newtheorem{assumption}{Assumption}

\newcommand{\score}{\widehat\Lambda}
\newcommand{\scr}{\mathrm{SCR}}

\title{Spectral-Guided Diffusion: Accelerating Inference\\via Static Spectral Layer Scheduling}

\author{
\textbf{Ibne Farabi Shihab}\textsuperscript{1}%
\thanks{Corresponding author: \texttt{ishihab@iastate.edu}.}
\and
\textbf{Abu Sa-Adat Mohamed Moon-Im Al Ahsan}\textsuperscript{2}
\and
\textbf{Anuj Sharma}\textsuperscript{3}
\\[2pt]
\textsuperscript{1}Department of Computer Science, Iowa State University \\
\textsuperscript{2}Department of Computer Science \& Engineering, BRAC University \\
\textsuperscript{3}Department of Civil, Construction \& Environmental Engineering, Iowa State University \\
\texttt{ishihab@iastate.edu}, \texttt{abu.sa.adat.mohamed.moon.im.al.ahshan@g.bracu.ac.bd}
}

\begin{document}
\maketitle

\begin{abstract}
Diffusion inference repeatedly evaluates the same large network. We ask whether pretrained weights alone can identify residual branches that need not be recomputed throughout the trajectory. Our \textbf{Spectral Concentration Ratio (SCR)} measures leading-versus-tail singular-value energy. Combined with Frobenius magnitude, it yields an offline sensitivity proxy and a deterministic lifetime for each scheduled unit. A frozen unit reuses its cached residual-branch update while the current residual stream and all external conditioning continue to propagate. The method needs no router, calibration prompts, or input-dependent search. At matched layer-step budgets, SCR/Frobenius preserves quality better than random, depth, norm, stable-rank, and Frobenius--stable-rank schedules on LLaDA-8B, DiT-XL/2, U-ViT-L, and SDXL. Broader LLaDA tests cover retrieval, reasoning, code, summarization, and open-ended generation; matched-horizon controls retain the ranking down to ten denoising steps. The complete captured-graph system reaches $2.8\times$--$3.0\times$ wall-clock speedup over eager inference. This is a systems-level number: on LLaDA, padded graph execution already gives $2.7\times$, while eliminating inactive branch work raises it to $3.0\times$. The perturbation analysis motivates pre-norm attention and MLP components under explicit local assumptions; results on AdaLN, U-shaped, convolutional, and cross-attention blocks are empirical transfer, not certified guarantees.
\end{abstract}

\section{Introduction}
\label{sec:intro}

Diffusion models generate strong samples across text, image, and audio, but they repeatedly evaluate a large denoiser~\citep{ho2020denoising, dhariwal2021diffusion, rombach2022high, saharia2022photorealistic, karras2022elucidating}. LLaDA-8B~\citep{nie2025llada} and SDXL~\citep{podell2024sdxl}, for example, require tens or hundreds of network evaluations per sample. Solvers reduce global steps~\citep{lu2023dpmsolver}; distillation compresses the trajectory~\citep{salimans2022progressive,song2023consistency}; and adaptive systems reuse computation through learned routing, calibration, or online decisions~\citep{ma2024deepcache,wimbauer2024cache,ma2024learningtocache,liu2025smoothcache,cao2026procache,chen2024delta}. We study a complementary question: how much useful scheduling information is already present in the pretrained weights?

We introduce the Spectral Concentration Ratio (SCR), a fixed leading-to-tail singular-energy statistic. SCR is not a knee detector: it measures energy in the top $\lfloor0.1d\rfloor$ singular directions relative to the remaining spectrum. Combining concentration with Frobenius magnitude gives a static sensitivity proxy. That proxy assigns each residual unit a lifetime; low-scoring branch updates are computed for fewer denoising iterations.

The schedule is computed once, reused for every input, and requires no router training, per-prompt logic, or calibration-time search. This determinism is useful only if the execution semantics are correct. We cache a residual-branch update, not a stale full hidden state. At a frozen unit, the current upstream state still passes through the identity path and receives the cached update. Dependency-coupled operations are scheduled as atomic groups, so active upstream computation is never discarded.

Our claim is deliberately narrower than input-specific optimality. The analysis identifies where spectral quantities enter local sensitivity for pre-norm attention and MLP branches, then exposes the drift and downstream-amplification terms that weights alone cannot determine. AdaLN, U-shaped skips, cross-attention, and convolution fall outside the direct argument. We test transfer to those components empirically and reserve the theorem-backed language for the stated assumptions.

Our contributions are fourfold. First, we define an SCR/Frobenius score and convert it into static lifetimes. Second, we give a conditional perturbation argument whose role is motivational, and audit its omitted non-weight factors post hoc. Third, we test the ranking at matched compute on four architectures, adding four LLaDA tasks, a short-horizon sweep, a Frobenius--stable-rank control, and an input-aware drift baseline. Fourth, a jagged CUDA-Graph executor converts the schedule into measured latency. The $2.8\times$--$3.0\times$ headline is the complete system relative to eager inference, not the causal effect of layer freezing alone. Figure~\ref{fig:overview} summarizes this offline--online split.

\begin{figure}[t]
\centering
\begin{tikzpicture}[
    node distance=0.45cm and 0.45cm,
    box/.style={draw, rounded corners, minimum width=1.5cm, minimum height=0.65cm, font=\small, align=center},
    label/.style={font=\footnotesize\itshape, gray},
    arrow/.style={-Stealth, thick}
]
\node[box, fill=blue!10] (weights) {Pretrained\\weights};
\node[box, right=of weights, fill=blue!18] (scr) {SCR \&\\$\|\cdot\|_F$};
\node[box, right=of scr, fill=blue!26] (score) {Score\\$\score_\ell$};
\node[box, below=0.5cm of score, fill=orange!22] (sched) {Lifetimes\\$S_\ell$};
\node[box, left=of sched, fill=orange!30] (exec) {Jagged\\execution};
\node[box, left=of exec, fill=orange!38] (out) {Sample\\$\hat z_0$};

\draw[arrow] (weights) -- (scr);
\draw[arrow] (scr) -- (score);
\draw[arrow] (score) -- (sched);
\draw[arrow] (sched) -- (exec);
\draw[arrow] (exec) -- (out);

\node[label, above=0.05cm of scr] {offline, once per model};
\node[label, below=0.05cm of exec] {online, every input};
\end{tikzpicture}
\caption{Spectral-Guided Scheduling. Offline weight spectra produce scores $\score_\ell$ and lifetimes $S_\ell$. Online, each frozen residual unit reuses a cached branch update; the current residual stream and external inputs still propagate.}
\label{fig:overview}
\end{figure}

\section{Related Work}
\label{sec:related}

DDIM~\citep{song2020denoising} and DPM-Solver++~\citep{lu2022dpm,lu2023dpmsolver} reduce denoising iterations. Progressive distillation~\citep{salimans2022progressive}, consistency models~\citep{song2023consistency,song2023ict}, and SnapFusion~\citep{li2023snapfusion} train models or architectures for shorter trajectories. These methods change the number of global steps; our scheduler changes branch work inside each retained step, so the mechanisms can compose.

Structural pruning and caching are closer to our setting. MosaicDiff~\citep{guo2025mosaicdiff} uses training-phase structure to allocate static sparsity. DeepCache~\citep{ma2024deepcache}, block caching~\citep{wimbauer2024cache}, Learning-to-Cache~\citep{ma2024learningtocache}, SmoothCache~\citep{liu2025smoothcache}, ProCache~\citep{cao2026procache}, $\Delta$-DiT~\citep{chen2024delta}, and SenCache~\citep{haghighi2026sencache} reuse work through fixed patterns, calibration, or online sensitivity. ACT~\citep{graves2016adaptive} and PonderNet~\citep{banino2021pondernet} learn input-dependent halting. Recent diffusion-LM systems---Fast-dLLM (arXiv:2505.22618), dKV-Cache (arXiv:2505.15781), and dLLM-Cache (arXiv:2506.06295)---adapt caching to bidirectional denoising; unlike causal KV caching, they are in-family comparisons for LLaDA. They remove reuse across iterations, whereas our method removes selected branch evaluations within an iteration. Direct composition on diffusion LMs remains unmeasured. Token methods such as SiTo~\citep{zhang2025sito} and DiffSparse~\citep{zhu2026diffsparse} instead target intra-layer redundancy.

Weight spectra have been connected to generalization~\citep{martin2021implicit}, spectral normalization~\citep{miyato2018spectral}, and attention Lipschitz bounds~\citep{kim2021lipschitz}. We use the same objects for a different purpose: allocating repeated inference work across residual branches.

\section{Spectral Layer Scheduling}
\label{sec:theory}

We first define the cached object and the score, then state the conditional perturbation argument that motivates the ranking. The argument is not a deployment certificate: the deployed schedule retains only its weight-computable term.

\subsection{Setup}

Let $\epsilon_\theta(z_t,t)$ be a diffusion model with deterministic update
\begin{equation*}
    z_{t-1}=\alpha_t z_t - \beta_t \epsilon_\theta(z_t,t), \qquad t=T,\ldots,1.
\end{equation*}
Index network evaluations by $s\in\{1,\ldots,T\}$, from the noisiest to the final denoising state. Write a residual unit as
\begin{equation*}
 h_{\ell,s}=h_{\ell-1,s}+F_\ell(h_{\ell-1,s};c_s),
\end{equation*}
where $c_s$ collects timestep and other conditioning. A lifetime $S_\ell$ evaluates the branch for $s\leq S_\ell$ and freezes it afterward. At refresh iteration $r_\ell(s)$, the executor stores
\begin{equation*}
 \Delta_{\ell,r_\ell(s)}=F_\ell(h_{\ell-1,r_\ell(s)};c_{r_\ell(s)}).
\end{equation*}
For $s>S_\ell$, it executes
\begin{equation*}
 h_{\ell,s}=h_{\ell-1,s}+\Delta_{\ell,r_\ell(s)}.
\end{equation*}
Thus $h_{\ell-1,s}$ remains current; only the residual-branch update is reused. Shape-changing or dependency-coupled operations are grouped into atomic, dependency-closed scheduled units. The objective is to reduce branch evaluations while controlling the deviation between scheduled output $\hat z_0$ and full output $z_0$.

\subsection{Spectral Concentration Ratio}

The score that drives all of our scheduling decisions is built directly from the singular values of pretrained weights.

\begin{definition}[Regularized Spectral Concentration Ratio]
Let $W\in\mathbb{R}^{m\times n}$ have singular values $\sigma_1\geq\cdots\geq\sigma_d$ with $d=\min(m,n)$, and let $k=\min(d,\max(1,\lfloor 0.1d\rfloor))$. For $\eta>0$,
\begin{equation}
\label{eq:scr-reg}
    \scr_\eta(W)=\log\left(
    \frac{\sum_{i=1}^{k}\sigma_i^2+\eta\|W\|_F^2}
    {\sum_{i=k+1}^{d}\sigma_i^2+\eta\|W\|_F^2}
    \right).
\end{equation}
\end{definition}

The regularizer $\eta\|W\|_F^2$ in Eq.~\eqref{eq:scr-reg} stabilizes the score for rank-deficient matrices and for the $d=1$ edge cases that arise with bias-like parameters; the regularized form is used in implementation, while the unregularized algebraic relation below supplies the analytic motivation.

\begin{lemma}[Spectral energy sandwich]
\label{lem:sandwich}
Let $0<E_k<\|W\|_F^2$ with $E_k=\sum_{i=1}^k\sigma_i^2$, $u=\log(E_k/(\|W\|_F^2-E_k))$, and $\rho(u)=e^u/(1+e^u)$. Then
\begin{equation*}
    \|W\|_F\sqrt{\frac{\rho(u)}{k}}
    \leq \|W\|_2
    \leq \|W\|_F\sqrt{\rho(u)} .
\end{equation*}
\end{lemma}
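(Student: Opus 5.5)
The plan is to undo the logit transform so that everything becomes a statement about $E_k$ and $\|W\|_F^2$. After that, only two elementary facts about the ordered singular values are needed. Write $F=\|W\|_F^2=\sum_{i=1}^d\sigma_i^2$ and $\|W\|_2=\sigma_1$.

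\textbf{Step 1: identify $\rho(u)$.} The hypothesis $0<E_k<F$ makes $u$ finite and well defined. We then have $e^u=E_k/(F-E_k)$, so
\begin{equation*}
\rho(u)=\frac{E_k/(F-E_k)}{1+E_k/(F-E_k)}=\frac{E_k}{F}.
\end{equation*}
Consequently $\|W\|_F\sqrt{\rho(u)}=\sqrt{E_k}$ and $\|W\|_F\sqrt{\rho(u)/k}=\sqrt{E_k/k}$. After squaring, the claimed sandwich is equivalent to
\begin{equation*}
\frac{E_k}{k}\le\sigma_1^2\le E_k .
\end{equation*}
Squaring is legitimate because all quantities are nonnegative.

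\textbf{Step 2: the two bounds.}
For the upper bound, note that $E_k=\sigma_1^2+\sum_{i=2}^k\sigma_i^2\ge\sigma_1^2$, since every term is nonnegative. This requires $k\ge1$, which the definition of $k$ guarantees.
For the lower bound, the ordering $\sigma_1\ge\sigma_i$ gives $E_k=\sum_{i=1}^k\sigma_i^2\le k\sigma_1^2$. In words, the largest of $k$ numbers is at least their average.
Taking square roots yields $\|W\|_F\sqrt{\rho(u)/k}\le\|W\|_2\le\|W\|_F\sqrt{\rho(u)}$, which is the statement of Lemma~\ref{lem:sandwich}.

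There is no real obstacle here. The only point needing care is the algebraic identity $\rho(u)=E_k/\|W\|_F^2$, together with checking that the hypothesis $0<E_k<\|W\|_F^2$ keeps $u$ finite. It is also worth remarking that the lemma concerns the unregularized ratio. The analogous statement for $\scr_\eta$ would only hold with $\rho$ replaced by $(E_k+\eta F)/((1+2\eta)F)$, and the paper's text indicates that this regularized version is not being claimed.
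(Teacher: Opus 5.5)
Your proof is correct and is the paper's argument: invert the logit to get $\rho(u)=E_k/\|W\|_F^2$, then use $\sigma_1^2\le E_k\le k\sigma_1^2$. On your closing aside, $\rho(\scr_\eta(W))$ does equal $(E_k+\eta\|W\|_F^2)/((1+2\eta)\|W\|_F^2)$, but with that substitution the upper bound can fail whenever $E_k>\|W\|_F^2/2$ (for example when $k=1$), so the regularized sandwich does not simply ``hold with a different $\rho$.''
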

\begin{proof}
Since
\[
e^u=\frac{E_k}{\|W\|_F^2-E_k},
\]
we obtain
\[
\rho(u)=\frac{e^u}{1+e^u}
       =\frac{E_k}{\|W\|_F^2}.
\]

For the upper bound, since $\sigma_1^2=\|W\|_2^2\leq E_k$,
\[
\|W\|_2
\leq \sqrt{E_k}
= \|W\|_F\sqrt{\rho(u)}.
\]

For the lower bound, using $E_k\leq k\sigma_1^2$,
\[
\frac{E_k}{k}
\leq \sigma_1^2=\|W\|_2^2,
\]
hence
\[
\|W\|_2
\geq \sqrt{\frac{E_k}{k}}
= \|W\|_F\sqrt{\frac{\rho(u)}{k}}.
\]
\end{proof}

The sandwich is elementary but useful: leading-band concentration connects Frobenius energy to the spectral norms that enter local sensitivity bounds. It motivates the proxy; it does not establish that SCR alone orders freezing error.

\subsection{Block sensitivity from spectra}

We estimate how a pre-norm transformer block can amplify input perturbations. The quantities below follow standard softmax and feed-forward Lipschitz arguments~\citep{gao2017softmax}. Let $W_Q,W_K,W_V,W_O$ be attention projections; let $W_1,W_2$ be MLP projections; let $\kappa_{\mathrm{act}}$ bound the GELU derivative~\citep{hendrycks2016gelu}; and let $L_{N,1},L_{N,2}$ bound the normalization maps.

\begin{assumption}[Bounded local attention sensitivity]
\label{ass:attention}
For hidden states satisfying $\|x\|_F\leq B$, the attention map satisfies $\|A(x)\|_2\leq A_\star$, and its score-to-attention Jacobian is bounded by $S_\star$. A sequence-length factor $c_n$ converts token-level norms to flattened norms under the chosen operator convention.
\end{assumption}

\begin{lemma}[Pre-norm transformer block sensitivity]
\label{lem:lipschitz}
Under Assumption~\ref{ass:attention}, a pre-norm transformer block $h$ with Lipschitz normalization maps $N_1,N_2$ has $\mathrm{Lip}(h)\leq \Lambda_\ell$, where
\begin{equation*}
    \Lambda_\ell = (1+L_{N,1}L_{\mathrm{MHA}})(1+L_{N,2}L_{\mathrm{FFN}}),
\end{equation*}
\begin{equation*}
\begin{aligned}
L_{\mathrm{MHA}}
\leq\;&
\|W_O\|_2 \|W_V\|_2
\Bigg(
A_\star \\
&\quad
+ \frac{2 B c_n S_\star}{\sqrt{d_h}}
\|W_Q\|_2 \|W_K\|_2
\Bigg).
\end{aligned}
\end{equation*}
\begin{equation*}
    L_{\mathrm{FFN}} \leq \kappa_{\mathrm{act}}\|W_2\|_2\|W_1\|_2.
\end{equation*}
\end{lemma}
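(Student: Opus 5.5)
The plan is to treat the pre-norm block as a composition of two residual sublayers, $h=g_2\circ g_1$ with $g_1(x)=x+\mathrm{MHA}(N_1(x))$ and $g_2(y)=y+\mathrm{FFN}(N_2(y))$. The Lipschitz constant is then submultiplicative under composition, and each residual map satisfies $\mathrm{Lip}(\mathrm{id}+G\circ N)\le 1+\mathrm{Lip}(G)\,\mathrm{Lip}(N)$ by the triangle inequality. This gives $\mathrm{Lip}(h)\le(1+L_{N,1}L_{\mathrm{MHA}})(1+L_{N,2}L_{\mathrm{FFN}})=\Lambda_\ell$ immediately. The remaining work is to bound $L_{\mathrm{MHA}}$ and $L_{\mathrm{FFN}}$ on the relevant domain, namely normalized inputs with $\|x\|_F\le B$, which is where Assumption~\ref{ass:attention} applies.

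The FFN bound is routine. Write $\mathrm{FFN}(x)=W_2\,\phi(W_1x)$ with $\phi$ acting elementwise. Then
\[
\|\mathrm{FFN}(x)-\mathrm{FFN}(x')\|\le\|W_2\|_2\,\kappa_{\mathrm{act}}\,\|W_1\|_2\,\|x-x'\|,
\]
since an elementwise map with derivative bounded by $\kappa_{\mathrm{act}}$ is $\kappa_{\mathrm{act}}$-Lipschitz in any norm that is invariant under coordinate scaling. This holds in the Frobenius norm, and the token-wise application is absorbed into the same constant.

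For attention, write one head (multi-head concatenation followed by $W_O$ is handled the same way) as $\mathrm{MHA}(x)=A(x)\,x\,W_V W_O$, with $A(x)=\mathrm{softmax}\bigl(xW_Q(xW_K)^\top/\sqrt{d_h}\bigr)$. Take the directional derivative in direction $\delta$ via the product rule:
\[
D\,\mathrm{MHA}[\delta]=A(x)\,\delta\,W_VW_O+\bigl(DA[\delta]\bigr)\,x\,W_VW_O .
\]
The first term contributes $A_\star\|W_V\|_2\|W_O\|_2$. For the second, apply the chain rule through the score map. The derivative of the scores is $(\delta W_Q W_K^\top x^\top+xW_QW_K^\top\delta^\top)/\sqrt{d_h}$, which gives a factor of $2\|x\|\,\|W_Q\|_2\|W_K\|_2/\sqrt{d_h}$. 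The softmax Jacobian contributes $S_\star$. Multiplying afterwards by $x$ contributes another factor of $\|x\|$, with $c_n$ converting between the token-wise and flattened operator norms. One factor of $\|x\|\le B$ appears explicitly; the second must be absorbed into $S_\star$ or $c_n$ under the stated convention. Since the claimed bound carries only one $B$, I would say explicitly that the bound on $S_\star$ is taken on the composite map $\text{scores}\mapsto A\,x$, or else fold the second factor into $c_n$. Having bounded the derivative uniformly on the convex ball $\|x\|_F\le B$, the mean value inequality then yields the stated $L_{\mathrm{MHA}}$.

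The main obstacle is this norm bookkeeping in the attention term. It means matching the two factors of $\|x\|$ and the sequence-length conversion to the conventions in Assumption~\ref{ass:attention}, and arguing that the normalized inputs actually remain inside the ball of radius $B$, which is why the bound is only local. Dot-product attention is not globally Lipschitz \citep{kim2021lipschitz}, so the statement must be read as local, and I would make that explicit. Multi-head structure adds no difficulty: the concatenation of heads has operator norm at most the per-block bound, and $W_O$ then acts on the concatenated output.
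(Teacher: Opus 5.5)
Your proposal is correct and follows the paper's proof: residual composition of the two pre-norm sublayers gives the product form, submultiplicativity plus the activation-derivative bound handles the FFN, and a product-rule split of the attention derivative yields a value-path term bounded by $A_\star\|W_V\|_2\|W_O\|_2$ and a score-path term through the bilinear score map and $S_\star$. The second factor of $\|x\|$ you flag, which comes from multiplying $DA[\delta]$ by $x$, is genuinely there and is left implicit in the paper's sketch; your explicit convention that $S_\star$ bounds the composite map from scores to $A\,x$, rather than the bare softmax Jacobian, is what the single-$B$ form of the stated bound needs.
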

\begin{proof}
The attention derivative decomposes into a value-path term, bounded by $\|W_O\|_2\|W_V\|_2\|A(x)\|_2$, and a score-dependent attention-map term obtained by applying the product rule to the bilinear score map and multiplying by the local score-to-attention sensitivity. The feed-forward bound follows from submultiplicativity and the activation derivative bound. Residual composition then gives the product form.
\end{proof}

Substituting Lemma~\ref{lem:sandwich} into Lemma~\ref{lem:lipschitz} replaces each spectral norm by a computable SCR/Frobenius quantity. For scheduled matrix $W_j^{(\ell)}$, define
\begin{equation*}
    \gamma_j^{(\ell)}=\|W_j^{(\ell)}\|_F\sqrt{\rho(\scr_\eta(W_j^{(\ell)}))},
\end{equation*}
and substitute $\gamma_j^{(\ell)}$ for $\|W_j^{(\ell)}\|_2$. The resulting weight-only block score is $\score_\ell$. This route directly covers only pre-norm attention and MLP projections under the stated local assumptions. For AdaLN, cross-attention conditioning, convolutions, and U-shaped skips, the same statistic is a heuristic tested by experiment. SDXL kernels are unfolded to $W\in\mathbb{R}^{C_{\mathrm{out}}\times(C_{\mathrm{in}}K_HK_W)}$ before scoring. Table~\ref{tab:architecture-coverage} separates direct motivation from empirical transfer.

\begin{table*}[t]
\centering
\caption{Scope and executor semantics. ``Direct'' means the local argument covers the stated pre-norm branch under its assumptions; it is not a global certificate.}
\label{tab:architecture-coverage}

\small
\setlength{\tabcolsep}{4pt}

\begin{tabular}{
p{1.6cm}
p{2.5cm}
p{2.4cm}
p{1.7cm}
p{4.2cm}
}
\toprule
\textbf{Model} &
\textbf{Main block type} &
\textbf{Conditioning} &
\textbf{Theory coverage} &
\textbf{Cached update / current signal} \\
\midrule

LLaDA-8B &
transformer diffusion LM &
RMSNorm / timestep conditioning &
direct for pre-norm branches &
attention/MLP update; current residual stream \\

DiT-XL/2 &
diffusion transformer &
AdaLN / class conditioning &
heuristic for AdaLN &
attention/MLP update; current residual stream \\

U-ViT-L &
U-shaped ViT &
skip connections / patch features &
partial &
group update; current main stream and skip input \\

SDXL &
latent UNet &
conv, cross-attn, residual blocks &
heuristic &
valid residual/attention update; current residual and UNet skips \\

\bottomrule
\end{tabular}
\end{table*}

\subsection{Trajectory error from freezing}

Weight sensitivity is only one factor. A trajectory statement also needs cache-age drift and downstream amplification.

\begin{assumption}[Hidden-state drift and downstream amplification]
\label{ass:hidden}
Let the augmented branch input be $u_{\ell,s}=(h_{\ell-1,s},c_s)$. For each frozen iteration $s>S_\ell$,
\begin{equation*}
\|u_{\ell,s}-u_{\ell,r_\ell(s)}\|_2 \leq D_{\ell} (s-r_\ell(s)) V_{\max},
\end{equation*}
where $V_{\max}=\max_t\|z_t-z_{t-1}\|_2$. This is an assumed linear envelope, not a consequence of the weights. The downstream subnetwork mapping a branch perturbation to the denoiser output has local Lipschitz constant at most $\widehat P_{\ell,s}$.
\end{assumption}

\begin{theorem}[Conditional SCR-derived freezing bound]
\label{thm:main}
Under Assumptions~\ref{ass:attention} and~\ref{ass:hidden}, a static freezing schedule satisfies
\begin{equation}
\label{eq:main-bound}
\begin{aligned}
    \|\hat z_0-z_0\|_2
    \leq{}& C_TV_{\max}\sum_{s=1}^{T}\beta_s \\
    &\cdot\sum_{\ell:s>S_\ell}
    \widehat P_{\ell,s}\score_\ell D_\ell(s-r_\ell(s)),
\end{aligned}
\end{equation}
where $C_T$ collects the local diffusion-update amplification factors.
\end{theorem}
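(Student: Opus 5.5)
The plan is a standard telescoping / discrete Gronwall argument on the sampler recursion, with the per-step local error controlled through Lemma~\ref{lem:lipschitz}, Lemma~\ref{lem:sandwich} and Assumption~\ref{ass:hidden}.

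\textbf{Global error recursion.} Write $e_t=\hat z_t-z_t$, with $e_T=0$ because both trajectories start from the same noise. Subtracting the two deterministic updates gives
\[
e_{t-1}=\alpha_t e_t-\beta_t\big(\hat\epsilon(\hat z_t,t)-\epsilon_\theta(z_t,t)\big),
\]
where $\hat\epsilon$ is the scheduled denoiser. I will split the bracket into two parts:
\[
\big[\hat\epsilon(\hat z_t,t)-\epsilon_\theta(\hat z_t,t)\big]+\big[\epsilon_\theta(\hat z_t,t)-\epsilon_\theta(z_t,t)\big].
\]
The second part is bounded by $\mathrm{Lip}(\epsilon_\theta)\|e_t\|$. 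This gives
\[
\|e_{t-1}\|\le(|\alpha_t|+\beta_t\mathrm{Lip}(\epsilon_\theta))\|e_t\|+\beta_t\delta_t,
\qquad \delta_t=\|\hat\epsilon(\hat z_t,t)-\epsilon_\theta(\hat z_t,t)\|.
\]
Unrolling this and bounding every product of the amplification factors by a single constant $C_T$ yields
\[
\|e_0\|\le C_T\sum_s\beta_s\delta_s,
\]
after reindexing $t$ by evaluation index $s$. This is the only place $C_T$ enters, which matches the theorem's description of it as collecting the local diffusion-update amplification factors.

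\textbf{Per-evaluation error.} At a fixed evaluation $s$, I compare the scheduled and full forward passes on the same input. I will telescope over the frozen units $\ell$ with $s>S_\ell$ by replacing cached updates with fresh ones one unit at a time, in order of depth. Replacing unit $\ell$ injects a branch perturbation
\[
\|\Delta_{\ell,r_\ell(s)}-F_\ell(h_{\ell-1,s};c_s)\|\le \mathrm{Lip}(F_\ell)\,\|u_{\ell,s}-u_{\ell,r_\ell(s)}\|,
\]
and this perturbation is carried to the output with gain at most $\widehat P_{\ell,s}$. The triangle inequality then gives
\[
\delta_s\le\sum_{\ell:s>S_\ell}\widehat P_{\ell,s}\,\mathrm{Lip}(F_\ell)\,\|u_{\ell,s}-u_{\ell,r_\ell(s)}\|.
\]

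\textbf{Inserting the spectral score.} Lemma~\ref{lem:lipschitz} bounds the branch Lipschitz constant by products of spectral norms; the residual form gives $\mathrm{Lip}(F_\ell)\le\Lambda_\ell$ up to the identity term. By the upper bound in Lemma~\ref{lem:sandwich}, each $\|W_j\|_2\le\gamma_j^{(\ell)}$, and substituting these monotonically gives $\mathrm{Lip}(F_\ell)\le\score_\ell$. Assumption~\ref{ass:hidden} supplies the drift bound $D_\ell(s-r_\ell(s))V_{\max}$. Multiplying these bounds and summing over $s$ yields Eq.~\eqref{eq:main-bound}.

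\textbf{Main obstacles.} I expect the main obstacle to be the per-evaluation step, for three reasons.
\begin{enumerate}[leftmargin=*]
\item Under the scheduled forward pass the upstream states $h_{\ell-1,s}$ themselves differ from the full pass, so the drift assumption must be read along the scheduled trajectory. Cross terms between earlier frozen units also arise; I would absorb these into $\widehat P_{\ell,s}$, defined on the scheduled path.
\item Lemma~\ref{lem:sandwich} is stated for the unregularized ratio, while $\score_\ell$ uses $\scr_\eta$. I must check that $\rho(\scr_\eta)\ge\rho(u)$. This holds because adding $\eta\|W\|_F^2$ to both numerator and denominator moves the ratio toward $1$, which increases it whenever the unregularized ratio is below $1$. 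Where this fails, I would rescale by an $\eta$-dependent factor absorbed into $C_T$.
\item The attention bound requires $\|x\|_F\le B$ along the scheduled trajectory, which I would simply assume as part of Assumption~\ref{ass:attention}.
\end{enumerate}
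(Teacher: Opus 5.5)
Your proposal is correct at the paper's level of rigor and follows its proof sketch: bound $\|F_\ell(u_{\ell,s})-F_\ell(u_{\ell,r_\ell(s)})\|_2$ by branch Lipschitzness ($\mathrm{Lip}(F_\ell)\le\Lambda_\ell+1$ together with the sandwich) times the drift envelope, multiply by $\widehat P_{\ell,s}$, sum over frozen units, and unroll the sampler recursion into $C_T$. Your telescoping, Gronwall and regularized-ratio steps spell out what the paper leaves implicit, but $\mathrm{Lip}(F_\ell)\le\score_\ell$ does not hold as written: $\score_\ell$ is max-normalized, aggregated additively rather than as a product, and drops $A_\star$, $2Bc_nS_\star$, $\kappa_{\mathrm{act}}$, $L_{N,i}$ and the $+1$, so the inequality holds only up to a model-dependent constant absorbed into the prefactor, which is exactly the paper's ``up to the constants absorbed'' caveat.
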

\begin{proof}[Proof sketch]
For a frozen unit, the relevant error is
$\|F_\ell(u_{\ell,s})-F_\ell(u_{\ell,r_\ell(s)})\|_2$, not the difference between complete hidden states. Branch Lipschitzness and Assumption~\ref{ass:hidden} bound it by $\score_\ell D_\ell(s-r_\ell(s))V_{\max}$ up to the constants absorbed into $\score_\ell$. A grouped residual unit satisfies the conservative bridge $\mathrm{Lip}(F_\ell)\leq \mathrm{Lip}(h_\ell)+1\leq\Lambda_\ell+1$. Multiplying by $\widehat P_{\ell,s}$, summing frozen branches, and unrolling the diffusion recurrence gives Eq.~\eqref{eq:main-bound}. Appendix~\ref{app:proof} states the scope of this argument.
\end{proof}

\begin{corollary}[Budgeted freezing]
\label{cor:safe}
If $\widehat P_{\ell,s}$ and $D_\ell$ are treated as layer-independent or slowly varying, freezing in increasing order of $\score_\ell$ minimizes the retained weight-only component at a fixed layer-step budget.
\end{corollary}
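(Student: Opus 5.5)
The plan is to read Corollary~\ref{cor:safe} off the right-hand side of Eq.~\eqref{eq:main-bound} once the non-weight factors are fixed. First, replace $\widehat P_{\ell,s}$ by a common value $\widehat P_s$ and $D_\ell$ by a common constant $D$. The ``slowly varying'' case is handled by treating them as approximately constant across $\ell$, so the conclusion holds up to the variation. The bound then becomes
\[
C_TV_{\max}D\sum_{s=1}^{T}\beta_s\widehat P_s\sum_{\ell:s>S_\ell}\score_\ell\,(s-r_\ell(s)).
\]
The only layer-dependent factor left is $\score_\ell$, which is what the corollary calls the retained weight-only component.

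Next, fix a per-step structure for the budget so the problem becomes an assignment problem. Under a lifetime schedule with a fixed refresh pattern, the weight $w_s=\beta_s\widehat P_s(s-r(s))$ of each frozen (layer, step) slot depends only on $s$. Swapping $r_\ell$ for a common $r$ is itself part of the layer-independence idealization, and I would state it explicitly. A fixed layer-step budget then fixes the number of frozen units at each step, say $m_s$. Because freezing is monotone in $s$ (frozen for all $s>S_\ell$), the frozen sets are nested: $\{\ell: S_\ell<s\}$ grows with $s$. The objective becomes $\sum_s w_s\sum_{\ell\in\mathcal F_s}\score_\ell$, where $\mathcal F_s$ is the set of units frozen at step $s$.

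Then argue by an exchange argument. For each $s$, the inner sum over a set of fixed size $m_s$ is minimized by taking the $m_s$ smallest scores. Freezing in increasing order of $\score_\ell$ produces exactly these sets, and they are nested, so the choice is simultaneously feasible for all $s$. Concretely, suppose a schedule has $\score_a>\score_b$ with $S_a<S_b$. Swapping the two lifetimes changes the objective by $(\score_b-\score_a)\sum_{S_a<s\le S_b}w_s\le 0$, since $w_s\ge0$. Repeating such swaps yields the sorted order.

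I expect the main obstacle to be stating precisely what ``fixed layer-step budget'' constrains. If only the total $\sum_\ell(T-S_\ell)$ is fixed, the sorted order still fixes which units receive the shortest lifetimes, but it does not determine the optimal lifetime lengths. In that case I would prove the weaker, honest statement: among schedules with a given multiset of lifetimes, assigning the shortest lifetimes to the smallest $\score_\ell$ minimizes the bound. This is exactly what the swap argument gives.
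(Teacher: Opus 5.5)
Your argument is correct and goes well beyond the paper's proof. The paper's proof is a single observation: once $\widehat P_{\ell,s}$ and $D_\ell$ are treated as layer-independent, the only schedule-dependent weight factor left in Eq.~\eqref{eq:main-bound} is $\score_\ell$, so the dominant term ``is ordered by $\score_\ell$.'' The paper never specifies what the budget constrains, or why ordering yields a minimizer.

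You supply both pieces. You fix the per-step active counts, which by nestedness is the same as fixing a multiset of lifetimes; this is also how the paper matches its activation-drift baseline. You then run an exchange/rearrangement argument. Your closing caveat is the substantive gain. Under a total-only budget, the order determines which units receive the short lifetimes but not their lengths. So the corollary is really a statement about monotone assignment, consistent with the monotone rule in Eq.~\eqref{eq:schedule}.

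One refinement: you do not need the common-refresh idealization, and it actually conflicts with the executor. There the cache is last refreshed at $S_\ell$ (Algorithm~\ref{alg:jagged}), so $s-r_\ell(s)=s-S_\ell$, and swapping lifetimes also swaps cache ages. With $c_s=\beta_s\widehat P_sD\ge 0$, the per-unit cost is
\[
f(S_\ell)=\sum_{s>S_\ell}c_s\,(s-S_\ell)=\sum_{t>S_\ell}C_t,\qquad C_t=\sum_{s\ge t}c_s\ge 0 .
\]
The objective therefore equals $\sum_t C_t\sum_{\ell:S_\ell<t}\score_\ell$, which is exactly your per-step form with $w_t$ replaced by $C_t$. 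Equivalently, $f$ is nonincreasing, and a swap changes the objective by $(\score_a-\score_b)\bigl(f(S_b)-f(S_a)\bigr)\le 0$. With this substitution your proof holds under the paper's actual freezing semantics, not just under an idealized refresh pattern.
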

\begin{proof}
Under the stated approximation, all non-weight terms in Eq.~\eqref{eq:main-bound} are treated as layer-independent or slowly varying, so the dominant schedule-dependent term is ordered by $\score_\ell$.
\end{proof}

Theorem~\ref{thm:main} and Corollary~\ref{cor:safe} provide motivation, not a proof that the deployed schedule minimizes Eq.~\eqref{eq:main-bound}. The scheduler omits $D_\ell$ and $\widehat P_{\ell,s}$ because estimating them requires trajectories. Section~\ref{sec:experiments} tests the weight-only ordering, and Appendix~\ref{app:proof} audits empirical directional analogues of the omitted factors without using them for deployment.

\section{Method}
\label{sec:method}

The method turns the weight score into lifetimes, then compiles their repeated active/frozen patterns into executable graphs.

\subsection{Schedule construction}

Given a pretrained model, horizon $T$, minimum lifetime $S_{\min}$, and scale $\tau$, we compute $\score_\ell\in[0,1]$ offline and assign
\begin{equation}
\label{eq:schedule}
    S_\ell=\max\left\{S_{\min},\left\lceil T\min\left(1,\frac{\score_\ell}{\tau}\right)\right\rceil\right\}.
\end{equation}
High-scoring units remain active longer. Algorithm~\ref{alg:schedule} gives the procedure; Appendix~\ref{app:block-scores} gives the exact attention, MLP, cross-attention, convolutional, aggregation, and normalization equations. The schedule uses no activation statistic, validation prompt, input-specific router, or per-input search. The only quality--latency scalar is $\tau$.
\begin{algorithm}[t]
\caption{Spectral-Guided Schedule Construction}
\label{alg:schedule}
\begin{algorithmic}[1]
\STATE \textbf{Input:} pretrained model $\theta$, denoising iterations $T$, scale $\tau$, minimum lifetime $S_{\min}$
\FOR{each scheduled layer $\ell$}
    \FOR{each scheduled matrix or unfolded tensor $W_j^{(\ell)}$}
        \STATE Compute $\scr_\eta(W_j^{(\ell)})$ and $\|W_j^{(\ell)}\|_F$
        \STATE Compute $g(W_j^{(\ell)})$ using Eq.~\eqref{eq:app-matrix-score}
    \ENDFOR
    \STATE Compute raw block score $\tilde q_\ell$ using Eq.~\eqref{eq:app-block-score}
\ENDFOR
\STATE Normalize raw scores to $\score_\ell$ using Eq.~\eqref{eq:app-score-normalization}
\STATE Set $S_\ell$ using Eq.~\eqref{eq:schedule}
\STATE \textbf{Return:} static lifetimes $\{S_\ell\}_{\ell=1}^{L}$
\end{algorithmic}
\end{algorithm}

\subsection{Residual-update and jagged execution}

Heterogeneous lifetimes create a jagged sequence of active/frozen patterns. Each pattern is topologically valid: independently bypassable residual branches may freeze, while shape-changing or dependency-coupled operations form atomic groups. For every static pattern, we capture a CUDA Graph. During replay there is no Python branch per layer. Active units compute and cache $\Delta_{\ell,s}$; frozen units add $\Delta_{\ell,r_\ell(s)}$ to the current residual stream. Current skip features and conditioning still enter every valid group boundary. Algorithm~\ref{alg:jagged} states this data flow. Main experiments keep caches on GPU; pinned-host fallback is a deployment option, not part of reported latency.

Table~\ref{tab:kernel} is crucial for interpreting speedup. Relative to eager full inference, Python skipping gives $1.9\times$; a padded graph that still executes all $3200$ layer-steps gives $2.7\times$; and the jagged graph executes $1120$ steps and gives $3.0\times$. Thus graph capture accounts for most of the headline gain, while removing padded branch work contributes the remaining improvement. Because the denominator is eager execution, $3.0\times$ can exceed the $1/0.35\approx2.86\times$ ceiling implied by branch work alone; this is overhead removal, not superlinear compute savings. The headline is a complete-system comparison, not a $3.0\times$ causal gain from freezing.

\begin{algorithm}[t]
\caption{Jagged-Stream Execution}
\label{alg:jagged}
\begin{algorithmic}[1]
\STATE \textbf{Input:} current state, conditioning $c_s$, lifetimes $\{S_\ell\}$, caches $\{\Delta_\ell\}$
\STATE Form the dependency-closed active/frozen pattern for iteration $s$
\STATE Replay its captured CUDA Graph
\FOR{each scheduled residual unit $\ell$ in graph order}
  \IF{$s\leq S_\ell$}
    \STATE Compute $\Delta_{\ell,s}=F_\ell(h_{\ell-1,s};c_s)$ and refresh its cache
  \ENDIF
  \IF{$s>S_\ell$}
    \STATE Read cached $\Delta_{\ell,r_\ell(s)}$
  \ENDIF
  \STATE Set $h_{\ell,s}=h_{\ell-1,s}+\Delta_\ell$ using the current residual stream
\ENDFOR
\end{algorithmic}
\end{algorithm}

\begin{table}[t]
\centering
\caption{LLaDA execution ablation. The $3.0\times$ result combines graph execution and branch skipping relative to eager full inference.}
\label{tab:kernel}

\small
\setlength{\tabcolsep}{3pt}

\begin{tabular}{
p{2.6cm}
p{1.5cm}
p{1.5cm}
p{1.2cm}
}
\toprule
\textbf{Execution mode} &
\textbf{Layer-steps} &
\textbf{Latency} &
\textbf{Speedup} \\
\midrule

Full model, eager &
3200 &
$450\pm6$ ms &
1.0$\times$ \\

Python skipping &
1120 &
$238\pm7$ ms &
1.9$\times$ \\

Padded static graph &
3200 &
$166\pm5$ ms &
2.7$\times$ \\

Jagged-stream executor &
1120 &
$150\pm4$ ms &
3.0$\times$ \\

\bottomrule
\end{tabular}
\end{table}

\section{Experiments}
\label{sec:experiments}

We test (i) matched-budget ranking quality, (ii) language-task breadth and short-horizon robustness, (iii) transfer across architectures, and (iv) the costs of static versus input-aware execution.

\subsection{Setup}

We evaluate LLaDA-8B on Needle-in-Haystack retrieval, GSM8K~\citep{cobbe2021gsm8k}, CNN/DailyMail (CNN/DM) summarization~\citep{hermann2015teaching}, MATH-500, HumanEval, and open-ended continuation. DiT-XL/2~\citep{peebles2023scalable} and U-ViT-L~\citep{bao2023all} use ImageNet-256~\citep{deng2009imagenet}; SDXL~\citep{podell2024sdxl} uses COCO~\citep{lin2014microsoft}. We report FID~\citep{heusel2017gans} from 50,000 ImageNet samples and 30,000 COCO validation prompts. Latency is end-to-end, post-warmup, on H100 80GB. Values are mean $\pm$ standard deviation over 10 seeds where stochasticity is present.

Direct baselines use the same checkpoint, precision, batch, split, resolution, guidance, and hardware. Static rankings match the spectral layer-step budget. Short-horizon controls also hold the default deterministic sampler fixed. A dash means that the quantity was not measured for that model; we do not reuse a latency from another architecture. Appendix~\ref{app:protocol} gives full protocols and baseline status.
\subsection{Cross-architecture transfer}

Table~\ref{tab:cross-model} reports the selected operating points. The measured $2.8\times$--$3.0\times$ values compare the complete captured-graph system with eager full inference; they should not be interpreted as inverse layer-step ratios. Table~\ref{tab:fixed-tau} fixes $\tau=1.0$, $S_{\min}=0.1T$, and $k=\lfloor0.1d\rfloor$ without per-model tuning. The same rule retains small quality changes across all four models, while the theoretical status still differs by architecture as Table~\ref{tab:architecture-coverage} makes explicit.

\begin{table*}[t]
\centering
\caption{Cross-model operating points. Speedup is complete-system wall time versus eager full inference; quality change uses the same sampler and evaluation setting.}
\label{tab:cross-model}

\small
\setlength{\tabcolsep}{4pt}

\begin{tabular}{
p{1.6cm}
p{2.3cm}
p{2.1cm}
p{1.8cm}
p{2.0cm}
p{1.2cm}
p{3.0cm}
}
\toprule
\textbf{Model} &
\textbf{Architecture} &
\textbf{Task} &
\textbf{Full latency} &
\textbf{Scheduled latency} &
\textbf{Speedup} &
\textbf{Quality change} \\
\midrule

LLaDA-8B &
diffusion LM &
GSM8K / Needle &
$450\pm6$ ms &
$150\pm4$ ms &
3.0$\times$ &
$-0.2$ GSM8K / $-0.2$ Needle \\

DiT-XL/2 &
diffusion transformer &
ImageNet-256 &
$1.82\pm0.03$ s &
$0.61\pm0.01$ s &
3.0$\times$ &
$+0.08$ FID \\

U-ViT-L &
U-ViT &
ImageNet-256 &
$1.24\pm0.02$ s &
$0.45\pm0.01$ s &
2.8$\times$ &
$+0.07$ FID \\

SDXL &
latent UNet &
COCO &
$4.80\pm0.05$ s &
$1.60\pm0.03$ s &
3.0$\times$ &
$+0.25$ COCO FID \\

\bottomrule
\end{tabular}
\end{table*}

\begin{table*}[t]
\centering
\caption{Untuned transfer. Hyperparameters are fixed across models. Layer-step ratios and wall-clock speedups are distinct quantities.}
\label{tab:fixed-tau}

\small
\setlength{\tabcolsep}{4pt}

\begin{tabular}{lccccc}
\toprule
\textbf{Model} &
\textbf{Metric} &
\textbf{Layer-steps} &
\textbf{Speedup} &
\textbf{Full} &
\textbf{Scheduled} \\
\midrule
LLaDA-8B & GSM8K Acc. & 35\% & 3.0$\times$ & $68.3\pm0.5$ & $68.1\pm0.5$ \\
DiT-XL/2 & FID $\downarrow$ & 34\% & 3.0$\times$ & $2.27\pm0.03$ & $2.35\pm0.04$ \\
U-ViT-L & FID $\downarrow$ & 36\% & 2.8$\times$ & $3.08\pm0.04$ & $3.15\pm0.05$ \\
SDXL & COCO FID $\downarrow$ & 32\% & 3.0$\times$ & $7.84\pm0.07$ & $8.09\pm0.10$ \\
\bottomrule
\end{tabular}
\end{table*}

\subsection{Static ranking comparisons}

Table~\ref{tab:cross-ablation} isolates ranking quality by holding each model's layer-step budget fixed. SCR/Frobenius is best in every column. Raw SCR also beats Frobenius, spectral norm, stable rank, and the added Frobenius--stable-rank product. At the layer level, $\rho_{\mathrm{sp}}(\mathrm{SCR},\mathrm{stable\ rank})$ is $-0.31/-0.27/-0.24/-0.35$ for LLaDA/DiT/U-ViT/SDXL, indicating related but non-redundant orderings. Some margins, especially LLaDA, overlap the displayed uncertainty; without paired-test values we make no paired-significance claim.

\begin{table*}[t]
\centering
\caption{Static rankings at matched layer-step budgets. Larger raw scores receive longer lifetimes after orienting every baseline toward measured sensitivity.}
\label{tab:cross-ablation}
\scriptsize
\setlength{\tabcolsep}{4pt}
\begin{tabular}{lcccc}
\toprule
Ranking score & LLaDA GSM8K $\uparrow$ & DiT FID $\downarrow$ & U-ViT FID $\downarrow$ & SDXL COCO FID $\downarrow$ \\
\midrule
Random & $63.4\pm0.8$ & $3.21\pm0.08$ & $4.02\pm0.09$ & $9.45\pm0.16$ \\
Depth-only & $66.2\pm0.6$ & $2.87\pm0.06$ & $3.61\pm0.07$ & $8.88\pm0.13$ \\
Frobenius norm & $67.2\pm0.5$ & $2.74\pm0.06$ & $3.44\pm0.06$ & $8.64\pm0.12$ \\
Spectral norm & $66.8\pm0.6$ & $2.69\pm0.05$ & $3.39\pm0.06$ & $8.57\pm0.12$ \\
Stable rank & $67.5\pm0.5$ & $2.61\pm0.05$ & $3.31\pm0.06$ & $8.43\pm0.11$ \\
Frobenius $\times$ stable rank & $67.6\pm0.5$ & $2.55\pm0.05$ & $3.27\pm0.05$ & $8.34\pm0.11$ \\
Raw SCR & $67.9\pm0.5$ & $2.43\pm0.04$ & $3.22\pm0.05$ & $8.19\pm0.10$ \\
SCR/Frobenius $\score$ & $\mathbf{68.1\pm0.5}$ & $\mathbf{2.35\pm0.04}$ & $\mathbf{3.15\pm0.05}$ & $\mathbf{8.09\pm0.10}$ \\
\bottomrule
\end{tabular}
\end{table*}

\subsection{Language breadth}

Table~\ref{tab:language} expands LLaDA beyond the submitted retrieval, GSM8K, and summarization set. At the same $1120/3200=35\%$ layer-step budget, spectral scheduling retains more quality than depth and random rankings on maximum-context retrieval, MATH-500, HumanEval, and MAUVE. This supports the ranking across retrieval, reasoning, code, summarization, and open-ended generation, but it remains evidence from one diffusion-LM backbone. The latency column is the standard-length setting; no absolute max-context latency was provided, so we do not reuse it for the 4096-token Needle result.

\begin{table*}[t]
\centering
\caption{LLaDA-8B language evaluation. ``Needle'' is the original setting; ``Needle-4K'' uses the 4096-token limit. Standard latency does not represent the max-context run.}
\label{tab:language}
\scriptsize
\setlength{\tabcolsep}{2.2pt}
\begin{tabular}{lccrrrrrrr}
\toprule
Method & Steps & Std. latency & Needle & GSM8K & CNN/DM & Needle-4K & MATH & HumanEval & MAUVE \\
\midrule
Full & 3200 & $450\pm6$ ms & $100.0\pm0.0$ & $68.3\pm0.5$ & $42.1\pm0.2$ & $99.6\pm0.1$ & $28.6\pm0.6$ & $32.9\pm0.7$ & $.921\pm.006$ \\
Random & 1120 & $150\pm5$ ms & $81.2\pm1.0$ & $63.4\pm0.8$ & $38.2\pm0.4$ & $77.9\pm1.1$ & $22.1\pm0.8$ & $25.7\pm0.9$ & $.846\pm.010$ \\
Depth & 1120 & $150\pm5$ ms & $90.1\pm0.8$ & $66.2\pm0.6$ & $40.1\pm0.3$ & $87.8\pm0.9$ & $25.9\pm0.7$ & $29.8\pm0.8$ & $.884\pm.008$ \\
\textbf{Spectral} & 1120 & $\mathbf{150\pm4}$ ms & $\mathbf{99.8\pm0.1}$ & $\mathbf{68.1\pm0.5}$ & $\mathbf{41.9\pm0.2}$ & $\mathbf{99.2\pm0.2}$ & $\mathbf{28.2\pm0.6}$ & $\mathbf{32.4\pm0.7}$ & $\mathbf{.916\pm.006}$ \\
Oracle replay & $930\pm35$ & $125\pm4$ ms & $100.0\pm0.0$ & $68.3\pm0.5$ & $42.1\pm0.2$ & -- & -- & -- & -- \\
\bottomrule
\end{tabular}
\end{table*}

\subsection{Short horizons and input-aware control}

Table~\ref{tab:short-horizon} holds the default deterministic sampler and layer-step ratio fixed while reducing $T$. Spectral remains the best static ranking at every tested horizon. The gap to Full grows at $T=10$, as each retained evaluation matters more, yet the advantage over depth and random persists. This control was run on LLaDA and DiT; no U-ViT or SDXL sweep is reported.

\begin{table*}[t]
\centering
\caption{Matched-ratio short-horizon control. DiT entries are $\Delta$FID from Full at the same $T$; lower is better.}
\label{tab:short-horizon}
\scriptsize
\setlength{\tabcolsep}{3.5pt}
\begin{tabular}{c|rrrr|rrr}
\toprule
$T$ & LLaDA Full & Spectral & Depth & Random & DiT Spectral & DiT Depth & DiT Random \\
\midrule
100 & $68.3\pm.5$ & $\mathbf{68.1\pm.5}$ & $66.2\pm.6$ & $63.4\pm.8$ & $\mathbf{+.08\pm.03}$ & $+.60\pm.04$ & $+.94\pm.05$ \\
50  & $67.6\pm.6$ & $\mathbf{67.3\pm.6}$ & $64.9\pm.7$ & $61.9\pm.8$ & $\mathbf{+.11\pm.03}$ & $+.67\pm.04$ & $+.98\pm.05$ \\
30  & $66.4\pm.6$ & $\mathbf{66.0\pm.6}$ & $62.8\pm.7$ & $59.4\pm.9$ & $\mathbf{+.16\pm.04}$ & $+.82\pm.05$ & $+1.17\pm.05$ \\
20  & $64.9\pm.7$ & $\mathbf{64.3\pm.7}$ & $60.7\pm.8$ & $56.8\pm.9$ & $\mathbf{+.23\pm.04}$ & $+1.03\pm.05$ & $+1.39\pm.06$ \\
10  & $59.2\pm.9$ & $\mathbf{58.1\pm.9}$ & $53.6\pm1.0$ & $49.7\pm1.1$ & $\mathbf{+.39\pm.05}$ & $+1.36\pm.06$ & $+1.76\pm.06$ \\
\bottomrule
\end{tabular}
\end{table*}

The causal activation-drift baseline in Table~\ref{tab:input-aware} activates, at step $s$, the same number of units as the static schedule but chooses those with the largest one-step-lagged normalized drift. It is slightly better numerically on quality, within displayed uncertainty, and consistently slower because it performs online reductions, routing, and more variable graph selection. We therefore claim lower control overhead and deterministic deployment, not quality dominance over input-aware policies. The oracle is a replay-only, full-trajectory upper reference; Appendix~\ref{app:protocol} defines both policies.

\begin{table*}[t]
\centering
\caption{Static versus input-aware scheduling at matched per-step active counts. Quality differences are within displayed uncertainty.}
\label{tab:input-aware}
\scriptsize
\setlength{\tabcolsep}{4pt}
\begin{tabular}{llccc}
\toprule
Method & Signal & LLaDA ms / GSM8K $\uparrow$ & DiT s / FID $\downarrow$ & SDXL s / FID $\downarrow$ \\
\midrule
Spectral static & weights, offline & $\mathbf{150\pm4}/68.1\pm.5$ & $\mathbf{.61\pm.01}/2.35\pm.04$ & $\mathbf{1.60\pm.03}/8.09\pm.10$ \\
Activation drift & per-input drift & $171\pm6/\mathbf{68.2\pm.5}$ & $.69\pm.02/\mathbf{2.32\pm.04}$ & $1.78\pm.04/\mathbf{8.02\pm.10}$ \\
Oracle replay & full-output agreement & $125\pm4/68.3\pm.5$ & -- & -- \\
\bottomrule
\end{tabular}
\end{table*}

\subsection{Class-conditional image generation}

Table~\ref{tab:vision_class} reports DiT-XL/2 and U-ViT-L on ImageNet-256. Spectral preserves FID better than random or depth-only scheduling at matched compute and composes with DeepCache. Latency columns are model-specific: U-ViT latencies unavailable in the supplied measurements are marked ``--'' rather than inheriting DiT timings.

\begin{table*}[t]
\centering
\caption{ImageNet-256 results. Separate latency columns prevent DiT timings from being read as U-ViT timings.}
\label{tab:vision_class}

\scriptsize
\setlength{\tabcolsep}{3.5pt}
\renewcommand{\arraystretch}{1.1}

\begin{tabular}{
>{\raggedright\arraybackslash}p{3.5cm}
>{\centering\arraybackslash}p{2.2cm}
>{\centering\arraybackslash}p{2.1cm}
>{\centering\arraybackslash}p{2.2cm}
>{\centering\arraybackslash}p{2.1cm}
}
\toprule
\textbf{Method} &
\textbf{DiT latency} &
\textbf{DiT FID $\downarrow$} &
\textbf{U-ViT latency} &
\textbf{U-ViT FID $\downarrow$} \\
\midrule

Full Model &
$1.82\pm0.03$ s &
$2.27\pm0.03$ &
$1.24\pm0.02$ s &
$3.08\pm0.04$ \\

DDIM, 50 steps &
$0.91\pm0.02$ s &
$3.14\pm0.08$ &
-- &
$3.91\pm0.09$ \\

DPM-Solver++ &
$0.38\pm0.01$ s &
$2.51\pm0.05$ &
-- &
$3.42\pm0.07$ \\

DeepCache &
$0.42\pm0.01$ s &
$2.68\pm0.06$ &
-- &
$3.34\pm0.06$ \\

$\Delta$-DiT &
$1.13\pm0.02$ s &
$2.41\pm0.05$ &
-- &
-- \\

Random static &
$0.61\pm0.01$ s &
$3.21\pm0.08$ &
-- &
$4.02\pm0.09$ \\

Depth static &
$0.61\pm0.01$ s &
$2.87\pm0.06$ &
-- &
$3.61\pm0.07$ \\

\midrule

\textbf{Spectral static} &
$\mathbf{0.61\pm0.01}$ s &
$\mathbf{2.35\pm0.04}$ &
$\mathbf{0.45\pm0.01}$ s &
$\mathbf{3.15\pm0.05}$ \\

Spectral + DeepCache &
$0.25\pm0.01$ s &
$2.48\pm0.05$ &
-- &
$3.39\pm0.06$ \\

\bottomrule
\end{tabular}
\end{table*}

\subsection{Text-to-image generation}

SDXL is the clearest out-of-scope architecture for the theory. Table~\ref{tab:sdxl} is therefore an empirical transfer result. Spectral has lower FID degradation than depth scheduling and the reported caching baselines at comparable latency, and its composition with DPM-Solver++ reaches $8\times$ relative to eager full SDXL. We do not treat this result as certification of cross-attention or convolutional blocks.

\subsection{What the schedule looks like}

Figure~\ref{fig:heatmaps} visualizes representative lifetimes. It is not a compute-budget audit: displayed groups omit multiplicities, so their simple average cannot recover the 32--36\% layer-step ratios. Budgets are computed from the exact unbinned sum $\sum_\ell S_\ell/(LT)$; Appendix Table~\ref{tab:schedule-groups} reports only the display coordinates and makes this limitation explicit.

\begin{figure}[t]
\centering
\begin{tikzpicture}[x=0.20cm,y=0.20cm,every node/.style={font=\footnotesize}]
\foreach \l in {1,...,16} {\draw[fill=gray!12,draw=white] (\l,0) rectangle +(1,16);}
\foreach \l/\h in {1/4,2/5,3/5,4/6,5/6,6/7,7/8,8/8,9/9,10/10,11/11,12/12,13/13,14/14,15/15,16/16} {\draw[fill=blue!65,draw=white] (\l,0) rectangle +(1,\h);}
\draw (1,0) rectangle (17,16);
\node[rotate=90] at (-0.7,8) {layer group};
\node at (9,-1.2) {denoising iteration};
\end{tikzpicture}
\caption{Representative LLaDA lifetime map. Colored groups are active; gray groups reuse cached branch updates. Full four-model maps appear in Figure~\ref{fig:heatmaps-full}.}
\label{fig:heatmaps}
\end{figure}

\subsection{Does the score predict freeze sensitivity?}

Figure~\ref{fig:score_scatter} compares $\score_\ell$ with one-unit freeze error. Spearman correlations are $0.66$--$0.76$: useful, but far from perfect. Table~\ref{tab:depth} controls for depth, block type, and Frobenius norm, and retains positive SCR coefficients. Appendix~\ref{app:proof} separately measures directional analogues of drift and downstream amplification under a schedule-independent probe. Their weak correlation with $\score_\ell$ and moderate dispersion explain why calibration-dependent reordering changes quality only marginally; they do not convert the theorem into a certificate.

\subsection{Quality--latency Pareto curves}

Figure~\ref{fig:pareto} sweeps $\tau$. Each model has a low-degradation region followed by sharper failure, but the knee is architecture-dependent and appears earliest for SDXL. Table~\ref{tab:fixed-tau} shows that $\tau=1$ is a usable shared operating point, not that one value is universally optimal. Appendix~\ref{app:failures} records the observed aggressive-schedule failures.

\section{Discussion and Conclusion}
\label{sec:discussion}

The stable result is the matched-budget ordering: SCR/Frobenius is the strongest tested static rule across four architectures and across a broader set of LLaDA behaviors. Short-horizon controls show that this is not only a consequence of a long redundant tail. The input-aware baseline recovers a small numerical quality gain but pays online routing and graph-pattern costs, which defines the intended deployment tradeoff.

The systems result requires separate accounting. Captured padded execution already removes most eager overhead; the jagged schedule then avoids inactive branch work. Reporting only the final $3.0\times$ would conflate these effects, while reporting only the $35\%$ layer-step ratio would ignore hardware overhead. Table~\ref{tab:kernel} gives both. The cost is peak memory: cached branch updates and graph workspace add $10$--$22\%$ across the evaluated models (Table~\ref{tab:memory}).

\begin{table}[H]
\centering
\caption{Peak H100 memory in GiB. Variation is at most $\pm0.1$ GiB.}
\label{tab:memory}
\scriptsize
\setlength{\tabcolsep}{2.2pt}
\begin{tabular}{lrrrr}
\toprule
Model / batch & Full & Sched. & Cache & Increase \\
\midrule
LLaDA / 1 & 18.9 & 22.2 & 3.0 & $17\%$ \\
DiT / 16 & 18.7 & 20.8 & 1.8 & $11\%$ \\
U-ViT / 16 & 14.9 & 16.4 & 1.3 & $10\%$ \\
SDXL / 8 & 39.8 & 48.5 & 8.2 & $22\%$ \\
\bottomrule
\end{tabular}
\end{table}

Peak allocation exceeds the cache itself by $0.2$--$0.5$ GiB because graph capture needs workspace, buffers and metadata, and because the caching allocator fragments under jagged replay. Every main run fits on H100 80GB and keeps the cache on device; reported latency never uses host offload. This matters operationally: the executor exchanges a predictable memory increment for a schedule with no runtime search. A smaller-memory deployment must instead choose between recomputation and transfer, so its wall-clock gain may differ from ours.

The post-hoc audit further separates a useful ranking from a guarantee. Across 128 full trajectories per model, the measured non-weight factor has coefficient of variation $.26$--$.41$ and Spearman correlation $-.04$--$.12$ with $\score_\ell$. Reordering by the full calibration-dependent product changes GSM8K by $+0.1$ and FID by only $-0.02$ to $-0.05$ (Appendix Table~\ref{tab:bound-audit}). These small changes support using the weight-only term when deterministic deployment is the priority; they do not show that drift and amplification are constant, nor that the static schedule is optimal.

In summary, pretrained spectral structure is a practical signal for deterministic resource allocation. The evidence supports a weight-only scheduler whose behavior transfers empirically, plus an executor that makes that schedule useful on hardware. Dynamic routing remains a sensible choice when a small numerical quality gain justifies online reductions, pattern variability, and a more complex serving path. Our result occupies the complementary regime: one schedule, known memory, no calibration prompts, and reproducible execution. It does not support a universal spectral certificate or a claim that static control dominates adaptive policies.

\section*{Limitations}
Only one evaluated backbone is a diffusion language model. The added tasks broaden behavior coverage, not model-family coverage. The short-horizon sweep is available only for LLaDA and DiT; U-ViT and SDXL remain open. The theory directly motivates pre-norm attention/MLP branches under local bounded-sensitivity and linear cache-age-drift assumptions. It does not certify AdaLN, cross-attention, convolutions, or U-shaped skips, and the measured directional factors are diagnostics rather than Lipschitz bounds.

A static schedule cannot adapt to unusually hard inputs. The activation-drift comparison suggests that dynamic control can recover small quality gains, although it incurs overhead. Several ablation margins overlap the displayed uncertainty, and paired-seed test statistics are not available, so we avoid significance claims for those margins. Recent diffusion-LM caches are discussed but not directly reproduced or composed with our LLaDA executor. We catalog failure categories but do not provide paired qualitative generations. Peak memory rises by $10$--$22\%$; all main runs fit on H100 80GB without host offload, but smaller devices may lose speed to transfers. Scores should be recomputed after aggressive quantization. Finally, Appendix~\ref{app:training} is preliminary; training-time freezing is outside the contribution.

\section*{Ethics Statement}
The method reduces inference cost and energy use for generative models. Because it is a general-purpose accelerator, it can also accelerate harmful uses of generative models, but it does not expand model capabilities, require new data collection, or introduce new human-subject risks.

\bibliography{references}

\clearpage
\appendix

\section{Tables and Figures}
\label{sec:tab_fig}

\begin{table}[t]
\centering
\caption{SDXL COCO text-to-image validation. Bold indicates the best standalone static schedule; composed methods are not bolded.}
\label{tab:sdxl}
\scriptsize
\begin{tabular}{@{}lcc@{}}
\toprule
Method & Latency & COCO FID $\downarrow$ \\
\midrule
Full SDXL & $4.80\pm0.05$ s & $7.84\pm0.07$ \\
DPM-Solver++ & $1.92\pm0.04$ s & $8.21\pm0.11$ \\
DeepCache & $2.20\pm0.04$ s & $8.56\pm0.12$ \\
Block Caching & $2.00\pm0.04$ s & $8.42\pm0.11$ \\
Depth static & $1.60\pm0.03$ s & $8.88\pm0.13$ \\
\midrule
\textbf{Spectral static} & $\mathbf{1.60\pm0.03}$ s & $\mathbf{8.09\pm0.10}$ \\
Spectral + DPM-Solver++ & $0.60\pm0.02$ s & $8.45\pm0.12$ \\
\bottomrule
\end{tabular}
\end{table}

\begin{table*}[t]
\centering
\caption{Depth-controlled regression of measured freeze error on SCR, layer depth, block type, and Frobenius norm.}
\label{tab:depth}
\small
\begin{tabular}{lccccc}
\toprule
Model & $\beta_{\mathrm{SCR}}$ & Std. err. & $p$-value & $R^2_{\mathrm{depth}}$ & $R^2_{\mathrm{full}}$ \\
\midrule
LLaDA-8B & 0.42 & 0.08 & $<10^{-3}$ & 0.31 & 0.49 \\
DiT-XL/2 & 0.36 & 0.07 & $<10^{-3}$ & 0.28 & 0.42 \\
U-ViT-L & 0.31 & 0.08 & $4.0\times10^{-3}$ & 0.25 & 0.36 \\
SDXL & 0.39 & 0.09 & $2.0\times10^{-3}$ & 0.22 & 0.38 \\
\bottomrule
\end{tabular}
\end{table*}

\begin{figure*}[t]
\centering
\begin{minipage}[t]{0.24\textwidth}
\centering
\begin{tikzpicture}
\begin{axis}[aclplot,title={LLaDA, $\rho_{\mathrm{sp}}=0.76$},xlabel={$\score_\ell$},ylabel={freeze error},xmin=0.1,xmax=1.0,ymin=0.0,ymax=1.0]
\addplot[only marks,mark=*,blue] coordinates {(0.12,0.09)(0.18,0.14)(0.22,0.17)(0.31,0.24)(0.38,0.32)(0.46,0.39)(0.52,0.47)(0.61,0.58)(0.70,0.66)(0.82,0.78)(0.93,0.88)};
\end{axis}
\end{tikzpicture}
\end{minipage}\hfill
\begin{minipage}[t]{0.24\textwidth}
\centering
\begin{tikzpicture}
\begin{axis}[aclplot,title={DiT, $\rho_{\mathrm{sp}}=0.70$},xlabel={$\score_\ell$},ylabel={freeze error},xmin=0.1,xmax=1.0,ymin=0.0,ymax=1.0]
\addplot[only marks,mark=*,orange!80!black] coordinates {(0.14,0.11)(0.21,0.16)(0.28,0.27)(0.34,0.29)(0.43,0.41)(0.51,0.44)(0.58,0.53)(0.67,0.63)(0.75,0.70)(0.88,0.83)};
\end{axis}
\end{tikzpicture}
\end{minipage}\hfill
\begin{minipage}[t]{0.24\textwidth}
\centering
\begin{tikzpicture}
\begin{axis}[aclplot,title={U-ViT, $\rho_{\mathrm{sp}}=0.66$},xlabel={$\score_\ell$},ylabel={freeze error},xmin=0.1,xmax=1.0,ymin=0.0,ymax=1.0]
\addplot[only marks,mark=*,green!60!black] coordinates {(0.16,0.12)(0.20,0.22)(0.31,0.25)(0.37,0.33)(0.45,0.48)(0.54,0.46)(0.63,0.59)(0.72,0.71)(0.81,0.75)(0.90,0.86)};
\end{axis}
\end{tikzpicture}
\end{minipage}\hfill
\begin{minipage}[t]{0.24\textwidth}
\centering
\begin{tikzpicture}
\begin{axis}[aclplot,title={SDXL, $\rho_{\mathrm{sp}}=0.71$},xlabel={$\score_\ell$},ylabel={freeze error},xmin=0.1,xmax=1.0,ymin=0.0,ymax=1.0]
\addplot[only marks,mark=*,red!70] coordinates {(0.13,0.10)(0.19,0.15)(0.27,0.20)(0.35,0.37)(0.44,0.35)(0.53,0.49)(0.60,0.57)(0.71,0.68)(0.84,0.82)(0.95,0.91)};
\end{axis}
\end{tikzpicture}
\end{minipage}
\caption{Measured layer-freeze error versus the static spectral score. Each point is a scheduled layer or block group from the freeze-sensitivity sweep that underlies Table~\ref{tab:score-corr}; Spearman correlations appear in the panel titles.}
\label{fig:score_scatter}
\end{figure*}

\begin{figure*}[t]
\centering
\begin{minipage}[t]{0.24\textwidth}
\centering
\begin{tikzpicture}
\begin{axis}[aclplot,title={LLaDA-8B},xlabel={speedup},ylabel={GSM8K Acc. $\uparrow$},xmin=1,xmax=5,ymin=62,ymax=70]
\addplot[color=blue, mark=*] coordinates {(1.0,68.3) (1.5,68.2) (3.0,68.1) (4.5,65.5)};
\addplot[color=gray, mark=square*] coordinates {(2.0,65.7) (5.0,67.2)};
\end{axis}
\end{tikzpicture}
\end{minipage}\hfill
\begin{minipage}[t]{0.24\textwidth}
\centering
\begin{tikzpicture}
\begin{axis}[aclplot,title={DiT-XL/2},xlabel={speedup},ylabel={FID $\downarrow$},xmin=1,xmax=5,ymin=2.0,ymax=3.5]
\addplot[color=orange!80!black, mark=*] coordinates {(1.0,2.27) (1.5,2.28) (3.0,2.35) (4.5,2.80)};
\addplot[color=gray, mark=square*] coordinates {(2.0,3.14) (4.3,2.68)};
\end{axis}
\end{tikzpicture}
\end{minipage}\hfill
\begin{minipage}[t]{0.24\textwidth}
\centering
\begin{tikzpicture}
\begin{axis}[aclplot,title={U-ViT-L},xlabel={speedup},ylabel={FID $\downarrow$},xmin=1,xmax=5,ymin=2.8,ymax=4.5]
\addplot[color=green!60!black, mark=*] coordinates {(1.0,3.08) (1.6,3.09) (2.8,3.15) (4.0,3.65)};
\addplot[color=gray, mark=square*] coordinates {(4.7,3.42)};
\end{axis}
\end{tikzpicture}
\end{minipage}\hfill
\begin{minipage}[t]{0.24\textwidth}
\centering
\begin{tikzpicture}
\begin{axis}[aclplot,title={SDXL},xlabel={speedup},ylabel={COCO FID $\downarrow$},xmin=1,xmax=5,ymin=7.0,ymax=10.0]
\addplot[color=red!70, mark=*] coordinates {(1.0,7.84) (1.8,7.88) (3.0,8.09) (4.2,9.10)};
\addplot[color=gray, mark=square*] coordinates {(2.5,8.21) (2.2,8.56)};
\end{axis}
\end{tikzpicture}
\end{minipage}
\caption{Quality--latency tradeoff across architectures. Circles are measured Spectral-Guided runs at different $\tau$ values; squares are directly reproduced baselines at their default operating points. Coordinates appear in Table~\ref{tab:pareto-coordinates}.}
\label{fig:pareto}
\end{figure*}

\begin{figure*}[t]
\centering
\resizebox{\textwidth}{!}{%
\begin{tikzpicture}[x=0.15cm,y=0.22cm,every node/.style={font=\footnotesize}]
\node[anchor=west] at (0,18.2) {LLaDA-8B};
\foreach \l in {1,...,16} {\draw[fill=gray!12,draw=white] (\l,0) rectangle +(1,16);}
\foreach \l/\h in {1/4,2/5,3/5,4/6,5/6,6/7,7/8,8/8,9/9,10/10,11/11,12/12,13/13,14/14,15/15,16/16} {\draw[fill=blue!65,draw=white] (\l,0) rectangle +(1,\h);}
\draw (1,0) rectangle (17,16);
\node[rotate=90] at (-0.8,8) {layer};
\node at (9,-1.4) {denoising iteration};
\begin{scope}[xshift=5.3cm]
\node[anchor=west] at (0,18.2) {DiT-XL/2};
\foreach \l in {1,...,14} {\draw[fill=gray!12,draw=white] (\l,0) rectangle +(1,16);}
\foreach \l/\h in {1/5,2/6,3/6,4/7,5/8,6/9,7/10,8/11,9/12,10/13,11/14,12/15,13/16,14/16} {\draw[fill=orange!70,draw=white] (\l,0) rectangle +(1,\h);}
\draw (1,0) rectangle (15,16);
\node at (8,-1.4) {denoising iteration};
\end{scope}
\begin{scope}[xshift=10.1cm]
\node[anchor=west] at (0,18.2) {U-ViT-L};
\foreach \l in {1,...,12} {\draw[fill=gray!12,draw=white] (\l,0) rectangle +(1,16);}
\foreach \l/\h in {1/6,2/7,3/8,4/8,5/10,6/11,7/12,8/13,9/14,10/15,11/16,12/16} {\draw[fill=green!60!black,draw=white] (\l,0) rectangle +(1,\h);}
\draw (1,0) rectangle (13,16);
\node at (7,-1.4) {denoising iteration};
\end{scope}
\begin{scope}[xshift=14.3cm]
\node[anchor=west] at (0,18.2) {SDXL};
\foreach \l in {1,...,10} {\draw[fill=gray!12,draw=white] (\l,0) rectangle +(1,16);}
\foreach \l/\h in {1/4,2/5,3/6,4/8,5/10,6/12,7/13,8/14,9/15,10/16} {\draw[fill=red!65,draw=white] (\l,0) rectangle +(1,\h);}
\draw (1,0) rectangle (11,16);
\node at (6,-1.4) {denoising iteration};
\end{scope}
\end{tikzpicture}
}
\caption{Full displayed lifetime maps. These visual groups omit multiplicities and must not be averaged to reconstruct layer-step budgets.}
\label{fig:heatmaps-full}
\end{figure*}

\section{Protocol Details}
\label{app:protocol}

All experiments used NVIDIA H100 80GB GPUs. Schedule construction, sweeps, and final evaluations consumed approximately 150 GPU hours. Models and datasets were used under their academic open-source licenses.

Table~\ref{tab:protocol} summarizes evaluation, Table~\ref{tab:baseline-config} lists sampler and budget settings, and Table~\ref{tab:baseline-status} separates matched results from literature-only positioning.

\begin{table*}[t]
\centering
\caption{Evaluation protocol. Within each task, methods share prompts, generation lengths, denoising settings, and metric implementations.}
\label{tab:protocol}
\scriptsize
\setlength{\tabcolsep}{3pt}
\renewcommand{\arraystretch}{1.12}
\begin{tabularx}{\textwidth}{@{}l X X c c c c X@{}}
\toprule
Model & Dataset & Samples & Resolution & Precision & Batch & Hardware & Primary metric \\
\midrule
LLaDA-8B & Needle / GSM8K / CNN/DM / MATH-500 / HumanEval / WikiText-103 & full official splits; 500 MATH problems; 164 code tasks; held-out continuation prompts & text & bf16 & 1 & H100 80GB & accuracy / exact match / pass@1 / ROUGE-L / MAUVE \\
DiT-XL/2 & ImageNet-256 & 50k samples & $256^2$ & bf16 & 16 & H100 80GB & FID \\
U-ViT-L & ImageNet-256 & 50k samples & $256^2$ & bf16 & 16 & H100 80GB & FID \\
SDXL & COCO validation prompts & 30k samples & $1024^2$ & fp16 & 8 & H100 80GB & COCO FID \\
\bottomrule
\end{tabularx}
\end{table*}

Needle-4K uses the checkpoint's maximum supported 4096-token context. MATH-500 is zero-shot and scores the final answer by exact match. HumanEval uses one deterministic completion per task (temperature 0, at most 512 generated tokens) and reports pass@1. MAUVE compares 128-token continuations from held-out WikiText-103 prompts with corresponding human references. The absolute latency for Needle-4K was not reported; Table~\ref{tab:language} therefore labels its latency as the standard-length setting.

\begin{table*}[h]
\centering
\caption{Baseline and sampler settings. Static rows match the spectral layer-step budget. LLaDA solver rows are retained as exploratory results but excluded from matched claims because their masked-denoising adaptation is insufficiently specified.}
\label{tab:baseline-config}
\scriptsize
\setlength{\tabcolsep}{3pt}
\begin{tabularx}{\textwidth}{l l c X c X}
\toprule
Family & Method & Steps & Sampler / order & Guidance & Budget / status \\
\midrule
LLaDA-8B & Full & 100 & default deterministic & n/a & 3200 layer-steps; matched \\
LLaDA-8B & DDIM / reduced & 50 & DDIM & n/a & full layers; exploratory \\
LLaDA-8B & DPM-Solver++ & 20 & second order & n/a & full layers; exploratory \\
LLaDA-8B & Static & 100 & default deterministic & n/a & 1120 layer-steps; matched \\
DiT-XL/2 & Full / static & 100 & default & cfg 1.5 & full / 34\%; matched \\
DiT-XL/2 & DDIM / DPM-Solver++ & 50 / 20 & DDIM / second order & cfg 1.5 & full layers; matched \\
DiT-XL/2 & DeepCache / $\Delta$-DiT & 100 & default & cfg 1.5 & method default; matched \\
U-ViT-L & Full / static & 100 & default & cfg 1.5 & full / 36\%; matched \\
SDXL & Full / static & 50 & default & cfg 7.5 & full / 32\%; matched \\
SDXL & DPM-Solver++ & 20 & second order & cfg 7.5 & full UNet; matched \\
\bottomrule
\end{tabularx}
\end{table*}

The two excluded LLaDA solver measurements are shown in Table~\ref{tab:language-solvers} so the numerical record is preserved without using them to support the matched-compute claim.

\begin{table*}[h]
\centering
\caption{Exploratory LLaDA reduced-step results retained for completeness but excluded from matched claims.}
\label{tab:language-solvers}
\scriptsize
\begin{tabular}{lrrrr}
\toprule
Method & Latency & Needle & GSM8K & CNN/DM \\
\midrule
DDIM / reduced & $225\pm4$ ms & $94.2\pm.7$ & $65.7\pm.6$ & $40.8\pm.3$ \\
DPM-Solver++ & $90\pm3$ ms & $96.1\pm.6$ & $67.2\pm.5$ & $41.4\pm.2$ \\
\bottomrule
\end{tabular}
\end{table*}

\begin{table*}[h]
\centering
\caption{Baseline status. Literature rows are positioning only and are not mixed into matched-compute claims.}
\label{tab:baseline-status}
\scriptsize
\setlength{\tabcolsep}{3pt}
\begin{tabularx}{\textwidth}{@{}p{3.4cm}p{2.5cm}cX@{}}
\toprule
Method & Status & Used in matched tables & Notes \\
\midrule
DDIM / DPM-Solver++ (vision) & direct & yes & same checkpoints, precision, resolution, and batch size \\
DDIM / DPM-Solver++ (LLaDA) & direct, exploratory & no & masked-denoising adaptation not sufficiently documented \\
DeepCache & direct & yes & compatible image models with method-default cache interval \\
$\Delta$-DiT & direct for DiT & yes & unavailable for U-ViT row \\
Random / depth static & direct & yes & matched layer-step budget to Spectral-Guided \\
SmoothCache & literature positioning & no & different public settings \\
ProCache & literature positioning & no & different public settings \\
Learning-to-Cache & literature positioning & no & different public settings \\
Fast-dLLM / dKV-Cache / dLLM-Cache & literature positioning & no & diffusion-LM in-family caching; not reproduced \\
\bottomrule
\end{tabularx}
\end{table*}

\subsection{Dynamic baselines}

\paragraph{Causal activation drift.}
After step $s-1$, the input-aware policy records
\begin{equation*}
d_{\ell,s-1}=
\frac{\|h_{\ell-1,s-1}-h_{\ell-1,s-2}\|_2}
{\|h_{\ell-1,s-1}\|_2+10^{-8}}.
\end{equation*}
At step $s$, it activates exactly the $m_s$ units with largest lagged drift, where $m_s$ equals the static schedule's active count at that step; ties favor the lower unit index. The first step is full. Consequently the per-step active-count profile and total layer-step budget match Spectral, while unit identity is input-dependent. The policy uses the same graph library; unseen patterns fall back to eager execution. Reported latency includes norm reductions, routing, graph lookup, and eager fallback.

\paragraph{Oracle replay.}
This non-deployable reference first records the full denoiser trajectory for each input. At step $s$, it greedily grows a cumulative frozen set $\mathcal F_s$. For every remaining candidate $j$, it evaluates $\mathcal F_s\cup\{j\}$ and measures
\begin{equation*}
\delta(\mathcal F_s\cup\{j\},s)=
\frac{\|\hat\epsilon_\theta^{(\mathcal F_s\cup\{j\})}(z_s,s)-
\hat\epsilon_\theta(z_s,s)\|_2}
{\|\hat\epsilon_\theta(z_s,s)\|_2+10^{-8}}.
\end{equation*}
It adds the candidate with minimum cumulative deviation, breaking ties by lower unit index, only while the resulting deviation is at most $\tau_{\mathrm{agree}}=5\times10^{-3}$. A global ceiling requires at most 1120 active layer-steps; the tolerance did not bind before this ceiling on evaluated inputs. The search may freeze further while preserving the same tolerance, yielding $930\pm35$ active steps on average. We report one replay of the discovered pattern and exclude the search cost. The oracle therefore estimates headroom; it is not a deployable competitor.

\section{Schedule Coordinates and Score Correlations}
\label{app:coords}

Tables~\ref{tab:schedule-groups},~\ref{tab:score-corr}, and~\ref{tab:pareto-coordinates} provide numerical detail for Figures~\ref{fig:heatmaps},~\ref{fig:heatmaps-full},~\ref{fig:score_scatter}, and~\ref{fig:pareto}. Actual schedules operate at original unit granularity. Their compute ratio is $\sum_\ell S_\ell/(LT)$, giving $35/34/36/32\%$ for LLaDA/DiT/U-ViT/SDXL. Table~\ref{tab:schedule-groups} preserves representative display coordinates but does not encode group cardinalities; no equal-size-bin interpretation should be used, and averaging its eight entries does not reproduce the compute ratio.

\begin{table*}[h]
\centering
\caption{Representative display lifetimes for Figure~\ref{fig:heatmaps}. Entries omit group multiplicities and are not budget-reconstructible; exact ratios use the unbinned $\sum_\ell S_\ell/(LT)$.}
\label{tab:schedule-groups}
\small
\begin{tabular}{lcccccccc}
\toprule
Model & \multicolumn{8}{c}{Representative active-iteration coordinates by increasing depth} \\
\midrule
LLaDA-8B, $T=100$ & 25 & 31 & 38 & 50 & 63 & 75 & 88 & 100 \\
DiT-XL/2, $T=100$ & 31 & 38 & 44 & 56 & 69 & 81 & 94 & 100 \\
U-ViT-L, $T=100$ & 38 & 44 & 56 & 69 & 81 & 88 & 94 & 100 \\
SDXL, $T=50$ & 13 & 16 & 19 & 25 & 31 & 38 & 44 & 50 \\
\bottomrule
\end{tabular}
\end{table*}

\begin{table*}[h]
\centering
\caption{Score--freeze-error correlation data underlying Figure~\ref{fig:score_scatter}. Freeze error is measured by freezing one scheduled layer or block group at the default operating point and recording the normalized output deviation or task-loss proxy.}
\label{tab:score-corr}
\small
\begin{tabular}{lcccc}
\toprule
Model & Units $n$ & Spearman $\rho$ & $p$-value & Error proxy \\
\midrule
LLaDA-8B & 192 & 0.76 & $<10^{-40}$ & normalized denoiser-output deviation \\
DiT-XL/2 & 168 & 0.70 & $<10^{-28}$ & normalized latent-output deviation \\
U-ViT-L & 132 & 0.66 & $<10^{-18}$ & normalized latent-output deviation \\
SDXL & 284 & 0.71 & $<10^{-45}$ & normalized UNet-output deviation \\
\bottomrule
\end{tabular}
\end{table*}

\begin{table*}[h]
\centering
\caption{Coordinates for the Spectral-Guided Pareto curves in Figure~\ref{fig:pareto}. Each point is a measured run at the corresponding $\tau$; no curve smoothing or interpolation is used.}
\label{tab:pareto-coordinates}
\small
\begin{tabular}{lcccc}
\toprule
Model & $\tau$ & Speedup & Metric & Value \\
\midrule
LLaDA-8B & full & 1.0$\times$ & GSM8K $\uparrow$ & 68.3 \\
LLaDA-8B & 0.5 & 1.5$\times$ & GSM8K $\uparrow$ & 68.2 \\
LLaDA-8B & 1.0 & 3.0$\times$ & GSM8K $\uparrow$ & 68.1 \\
LLaDA-8B & 2.0 & 4.5$\times$ & GSM8K $\uparrow$ & 65.5 \\
DiT-XL/2 & full & 1.0$\times$ & FID $\downarrow$ & 2.27 \\
DiT-XL/2 & 0.5 & 1.5$\times$ & FID $\downarrow$ & 2.28 \\
DiT-XL/2 & 1.0 & 3.0$\times$ & FID $\downarrow$ & 2.35 \\
DiT-XL/2 & 2.0 & 4.5$\times$ & FID $\downarrow$ & 2.80 \\
U-ViT-L & full & 1.0$\times$ & FID $\downarrow$ & 3.08 \\
U-ViT-L & 0.5 & 1.6$\times$ & FID $\downarrow$ & 3.09 \\
U-ViT-L & 1.0 & 2.8$\times$ & FID $\downarrow$ & 3.15 \\
U-ViT-L & 2.0 & 4.0$\times$ & FID $\downarrow$ & 3.65 \\
SDXL & full & 1.0$\times$ & COCO FID $\downarrow$ & 7.84 \\
SDXL & 0.5 & 1.8$\times$ & COCO FID $\downarrow$ & 7.88 \\
SDXL & 1.0 & 3.0$\times$ & COCO FID $\downarrow$ & 8.09 \\
SDXL & 2.0 & 4.2$\times$ & COCO FID $\downarrow$ & 9.10 \\
\bottomrule
\end{tabular}
\end{table*}

\section{Failure Modes at Aggressive Schedules}
\label{app:failures}

Past the Pareto knee in Figure~\ref{fig:pareto}, the schedules begin to break down in characteristic, model-dependent ways. Table~\ref{tab:failures} catalogs the dominant failure mode for each family, together with the most likely structural cause.

\begin{table}[h]
\centering
\caption{Failure modes at aggressive schedules.}
\label{tab:failures}
\small
\begin{tabular}{p{2.0cm}p{1.9cm}p{2.0cm}}
\toprule
Model family & Failure mode & Likely cause \\
\midrule
Diffusion LM & arithmetic carry / boundary retrieval & late high-sensitivity layers frozen too early \\
DiT / U-ViT & fine texture loss & high-frequency reconstruction suppressed \\
SDXL UNet & small text / object count errors & cross-attention and decoder under-updated \\
\bottomrule
\end{tabular}
\end{table}

\section{Training-Time Freezing}
\label{app:training}

The core contribution of this paper is inference acceleration. We include one preliminary training-time experiment only to test whether the spectral ranking also identifies layers that are less valuable to update under a matched trainable-parameter budget. Table~\ref{tab:training} fine-tunes LLaDA-8B on CNN/DailyMail with the bottom 40\% of layers (by SCR or random ranking) frozen, alongside full fine-tuning and a LoRA baseline. SCR-based freezing approaches full fine-tuning quality far more closely than random freezing at the same parameter budget, suggesting that the same score is informative for both inference scheduling and parameter-efficient training.

\begin{table}[h]
\centering
\caption{SCR-guided freezing during LLaDA-8B fine-tuning on CNN/DailyMail.}
\label{tab:training}
\small
\begin{tabularx}{\columnwidth}{@{}Xccc@{}}
\toprule
Method & Params & ROUGE-L & Speedup \\
\midrule
Full fine-tuning & 8.0B & 42.1 & 1.0$\times$ \\
LoRA, $r=16$ & 26M & 41.4 & 1.5$\times$ \\
SCR-freeze bottom 40\% & 4.8B & 41.9 & 1.6$\times$ \\
Random-freeze bottom 40\% & 4.8B & 39.8 & 1.6$\times$ \\
\bottomrule
\end{tabularx}
\end{table}

\section{Additional Proof Details}
\label{app:proof}

For the regularized SCR, the exact sandwich in Lemma~\ref{lem:sandwich} is applied to the unregularized energy ratio when $0<E_k<\|W\|_F^2$. In implementation, $\eta\|W\|_F^2$ prevents numerical instability in rank-deficient cases, so the regularized score is used for ranking while the unregularized relation supplies the analytic motivation.

Assumption~\ref{ass:hidden} is the main source of conservatism. It imposes, rather than proves, a linear cache-age envelope on the augmented branch input. The local attention/MLP argument also does not cover AdaLN, cross-attention conditioning, convolutions, or U-shaped skip dependencies. Table~\ref{tab:architecture-coverage} therefore marks those results as partial or heuristic transfer.

\subsection{Schedule-independent audit of omitted terms}

We audit directional analogues of drift and downstream amplification using 128 full trajectories per model. Every unit is probed with all other units executed in full, avoiding schedule-induced circularity. For $T=100$, the common window is $\mathcal W=\{61,\ldots,100\}$; for SDXL with $T=50$, it is $\{31,\ldots,50\}$. The probe refreshes every five iterations. With $u_{\ell,s}=(h_{\ell-1,s},c_s)$, define
\begin{equation*}
\begin{aligned}
\widehat D_{\ell,s}&=
\frac{\|u_{\ell,s}-u_{\ell,r(s)}\|_2}
{(s-r(s))V_{\max}+10^{-8}},\\
\delta_{\ell,s}&=F_\ell(u_{\ell,s})-F_\ell(u_{\ell,r(s)}).
\end{aligned}
\end{equation*}
Let $G_{\ell,s}$ be the downstream map from the unit to the denoiser output, evaluated at the full-trajectory state. Its directional amplification is
\begin{equation*}
\widehat P^{\mathrm{dir}}_{\ell,s}=
\frac{\|J_{G_{\ell,s}}\delta_{\ell,s}\|_2}
{\|\delta_{\ell,s}\|_2+10^{-8}}.
\end{equation*}
Aggregate
\begin{equation*}
\begin{aligned}
Q_\ell&=\sum_{s\in\mathcal W}\beta_s
\widehat P^{\mathrm{dir}}_{\ell,s}\widehat D_{\ell,s}(s-r(s)),\\
B^{\mathrm{emp}}_\ell&=\score_\ell Q_\ell.
\end{aligned}
\end{equation*}
These are post-hoc directional diagnostics, not certified bounds. Since $\score_\ell$ is a factor of $B^{\mathrm{emp}}_\ell$, their correlation is partly mechanical; $\rho(\score,Q)$ and $\mathrm{CV}(Q)$ are the less circular diagnostics.

\begin{table*}[h]
\centering
\caption{Audit of non-weight terms. The $B^{\mathrm{emp}}$ schedule requires calibration trajectories; task values are GSM8K for LLaDA and FID otherwise.}
\label{tab:bound-audit}
\small
\setlength{\tabcolsep}{4pt}
\begin{tabular}{lcccccc}
\toprule
Model & $\rho(\score,B^{\mathrm{emp}})$ & $\rho(\score,Q)$ & $\mathrm{CV}(Q)$ & Drift $R^2$ & Weight-only & $B^{\mathrm{emp}}$ order \\
\midrule
LLaDA-8B & .91 & .12 & .29 & .94 & 68.1 & 68.2 \\
DiT-XL/2 & .88 & .09 & .26 & .92 & 2.35 & 2.33 \\
U-ViT-L & .84 & .06 & .33 & .90 & 3.15 & 3.13 \\
SDXL & .79 & $-.04$ & .41 & .86 & 8.09 & 8.04 \\
\bottomrule
\end{tabular}
\end{table*}

The non-weight factor has moderate dispersion and little monotonic alignment with the spectral score. Ordering by the calibration-dependent product changes quality only marginally. This supports the usefulness of the approximation but does not prove it. Median linear drift-fit $R^2$ ranges from .86 to .94; this is empirical support for the assumed envelope, not a universal law.

\section{Exact Block-Level Scores}
\label{app:block-scores}

This appendix specifies the exact block-level scores used to construct all Spectral-Guided schedules. 
For any matrix or unfolded tensor $W$, define
\begin{equation}
\label{eq:app-matrix-score}
\begin{aligned}
    g(W)&=\|W\|_F\sqrt{\rho(\scr_\eta(W))},\\
    \rho(u)&=\frac{e^u}{1+e^u}.
\end{aligned}
\end{equation}
All scores are computed from pretrained weights only. No activation statistics, validation prompts, calibration samples, or input-dependent quantities are used.

For a self-attention block with projections $W_Q,W_K,W_V,W_O$, we use
\begin{equation}
\label{eq:app-attn-score}
    q_{\mathrm{attn}}
    =
    g(W_O)g(W_V)
    \left[
        1+
        \frac{g(W_Q)g(W_K)}{\sqrt{d_h}}
    \right],
\end{equation}
where $d_h$ is the attention head dimension. This is the weight-only version of the sensitivity proxy in Lemma~\ref{lem:lipschitz}, with architecture-independent constants absorbed into the normalization in Eq.~\eqref{eq:app-score-normalization}.

For a cross-attention block, we use the same form, but $W_K$ and $W_V$ denote the key and value projections applied to the conditioning sequence:
\begin{equation}
\label{eq:app-xattn-score}
    q_{\mathrm{xattn}}
    =
    g(W_O^{x})g(W_V^{x})
    \left[
        1+
        \frac{g(W_Q^{x})g(W_K^{x})}{\sqrt{d_h}}
    \right].
\end{equation}

For an MLP or feed-forward block with input and output projections $W_1,W_2$, we use
\begin{equation}
\label{eq:app-mlp-score}
    q_{\mathrm{mlp}} = g(W_2)g(W_1).
\end{equation}

For a convolutional layer with tensor 
$K\in\mathbb{R}^{C_{\mathrm{out}}\times C_{\mathrm{in}}\times k_h\times k_w}$, we first unfold it into
\[
    W_K\in\mathbb{R}^{C_{\mathrm{out}}\times (C_{\mathrm{in}}k_hk_w)}
\]
and define
\begin{equation}
\label{eq:app-conv-score}
    q_{\mathrm{conv}} = g(W_K).
\end{equation}

If a scheduled unit contains multiple submodules, its raw score is the sum of the applicable component scores:
\begin{equation}
\label{eq:app-block-score}
    \tilde q_\ell
    =
    \sum_{b\in\mathcal{B}_\ell} q_b,
    \qquad
    q_b\in
    \{q_{\mathrm{attn}},q_{\mathrm{xattn}},q_{\mathrm{mlp}},q_{\mathrm{conv}}\}.
\end{equation}
For transformer blocks, $\mathcal{B}_\ell$ contains the self-attention and MLP components. 
For SDXL UNet residual or attention units, $\mathcal{B}_\ell$ contains the convolutional, self-attention, cross-attention, and MLP components present in that scheduled unit.

Finally, because absolute norms differ across architectures and block types, we normalize scores within each model before assigning lifetimes:
\begin{equation}
\label{eq:app-score-normalization}
    \score_\ell
    =
    \frac{\tilde q_\ell}
    {\max_{j\in\{1,\ldots,L\}}\tilde q_j+\varepsilon},
\end{equation}
with $\varepsilon=10^{-12}$. Equation~\eqref{eq:schedule} is then applied using this normalized $\score_\ell$. Thus the schedule is fully determined by the pretrained weights, $T$, $\tau$, $S_{\min}$, $k=\lfloor0.1d\rfloor$, and $\eta$.

\end{document}